\lstdefinelanguage{its}
{morekeywords={if, =, +, >=, +, -, (, ), [, ], true, false, typedef, transition, int,GAL, abort, !, \{, \}, label, ", &&, ., composite, synchronization,for,gal, array},
morecomment=[l]{//},
sensitive=false,
}
\newcommand{\nat}%
{\ensuremath{\mathds{N}}}
\newcommand{\zrel}%
{\ensuremath{\mathds{Z}}}
\newcommand{\rea}%
{\ensuremath{\mathds{R}}}
\newcommand{\bool}%
{\ensuremath{\mathds{B}}}
\newcommand{\pre}%
{\ensuremath{\preccurlyeq}}
\newcommand{\ubar}[1]{\underaccent{\bar}{#1}}
\newcommand{\tuple}[1]{\langle #1 \rangle}
\newcounter{rrule}
\renewcommand{\implies}{\Rightarrow}
\newcommand{\lang}{\ensuremath{\mathscr{L}}}
\newcommand{\AP}{\ensuremath{\mathrm{AP}}}
\newcommand{\KS}{\ensuremath{\mathit{KS}}}
\newcommand{\tr}{\xrightarrow}
\begin{document}
\title{LTL under reductions with weaker conditions than stutter invariance}




%
%
\author{Emmanuel Paviot-Adet\inst{1,2}, Denis Poitrenaud \inst{1,2}, Etienne Renault\inst{3}, Yann Thierry-Mieg\inst{1} }

%
\authorrunning{PPRT} 

%
\institute{Sorbonne Université, CNRS, LIP6, F-75005 Paris, France \\
\email{first.last@lip6.fr}
\and Université de Paris, F-75006 Paris, France \\
 \and EPITA, LRDE, Kremlin-Bic\^etre, France\\ 
\email{ renault@lrde.epita.fr }
}
\maketitle              
\begin{abstract}

Verification of properties  expressed as $\omega$-regular languages such as LTL can benefit hugely from stutter insensitivity, using a diverse set of reduction strategies. However properties that are not stutter invariant, for instance due to the use of the neXt operator of LTL or to some form of counting in the logic, are not covered by these techniques in general.

We propose in this paper to study a weaker property than stutter insensitivity.
In a stutter insensitive language both adding and removing stutter to
a word does not change its acceptance, any stuttering can be abstracted away; by decomposing this equivalence relation into two implications we obtain weaker conditions.
We define a shortening insensitive language where any word that stutters less than a word in the language must also belong to the language. A lengthening insensitive language has the dual property.
A semi-decision procedure is then introduced to reliably prove shortening insensitive properties or deny lengthening insensitive properties while working with a \emph{reduction} of a system. A reduction has the property that it can only shorten runs. 
Lipton's transaction reductions or Petri net agglomerations are examples of eligible structural reduction strategies.

An implementation and experimental evidence is provided showing most non-random properties sensitive to stutter are actually shortening or lengthening insensitive. 
Performance of experiments on a large (random) benchmark from the model-checking competition indicate that despite being a semi-decision procedure, the approach can still improve state of the art verification tools.

\end{abstract}

\section{Introduction}


Model checking is an automatic verification technique for proving the correctness of   systems that have finite state abstractions. 
Properties can be expressed using the popular Linear-time Temporal Logic (LTL).
To verify LTL properties, the automata-theoretic approach~\cite{Vardi07} builds a product between a Büchi automaton representing the negation of the LTL formula and the reachable state graph of the system (seen as a set of infinite runs).
This approach has been used successfully to verify both hardware and software components, but it suffers from the so called "state explosion problem": as the number of state variables in the system increases, the size of the system state space grows exponentially. 

One way to tackle this issue is to consider \textit{structural reductions}. Structural reductions take their roots  in the work of  Lipton~\cite{Lipton75} and  Berthelot~\cite{Berthelot85}. Nowadays, these reductions are still considered as an attractive way to alleviate the state explosion problem~\cite{Laarman18,berthomieu19}. 
Structural reductions strive to fuse structurally "adjacent" events into a single atomic step, leading to less interleaving of independent events and less observable behaviors
in the resulting system. An example of such a structural reduction is shown on Figure~\ref{fig:example} where actions are progressively grouped (see Section~\ref{ss:kripke} for a more detailed presentation). It can be observed that the Kripke structure representing the state space of the program is significantly simplified.

%
%

Traditionally structural reductions construct a smaller system that
preserves properties such as deadlock freedom, liveness, reachability~\cite{HPP06}, and \emph{stutter insensitive} temporal logic~\cite{PPP00} such as LTL$_{\setminus X}$. 
The verification of a \emph{stutter insensitive} property on a given system
does not depend on whether non observable events (i.e. that do not update atomic propositions)
are abstracted or not.
On Fig~\ref{fig:example} both instructions "$z=40;$" and "$chan.send(z)$" of thread $\beta$ are non observable.

This paper shows that structural reductions can in fact be used even for fragments of LTL that are \emph{not} stutter insensitive. 
We identify two fragments that we call 
\emph{shortening insensitive} (if a word is in the language, any version that stutters less also) or \emph{lengthening insensitive} (if a word is in the language, any version that stutters more also).
Based on this classification we introduce two semi-decision procedures that provide a reliable verdict only in one direction: e.g. presence of counter examples is reliable for lengthening insensitive properties, but absence is not.

The paper is structured as follows, Section~\ref{sec:defs} presents the definitions and notations relevant to our setting in an abstract manner, focusing on the level of description of a language. Section~\ref{sec:red} instantiates these definitions in the more concrete setting of LTL verification. Section~\ref{sec:perfs} provides experimental evidence supporting the claim that the method is both 
applicable to many formulae and can significantly improve state of the art model-checkers. Some related work is presented in Section~\ref{sec:related} before concluding.

\begin{figure}[tbp]
    \centering
  \begin{subfigure}[t]{\linewidth}
  \centering
\includestandalone[width=1.05\linewidth]{figs/fig}
    \caption{
    Example of reductions. (1) is a program with two threads and 3 variables. \textit{chan} is a communication channel where \textit{send(int)} insert a message and \textit{int recv()} waits until a message is available and then consumes it. We consider that the logic only observes whether $x$ or $y$ is zero denoted $p$ and $q$. (2) depicts the state-space represented as a Kripke structure. Each node is labelled by the value of atomic propositions $p$ and $q$. 
    When an instruction is executed the value of these propositions \emph{may} evolve. (3) represents the state-space of a structurally reduced version of the program where actions of thread $\beta$ "z=40;chan.send(z)" are fused into a single atomic operation. (4) represents the state-space of a program where the three actions of the original program "z=40;chan.send(z);y=chan.recv()" are now a single atomic step.
    }
    \label{fig:example}
\end{subfigure}
\begin{subfigure}[t]{\linewidth}
    \centering
\includestandalone[width=0.63\textwidth]{figs/figure}
    \caption{
    $\Sigma^\omega$ is represented as a circle that is partitioned into equivalence classes of words ($\hat{r_0}, \hat{r_1} \ldots$). Each point in the space is a word, and some of the $\pre$ relations are represented as arrows ; the red point is the shortest word $\underline{\hat{r}}$ in the equivalence class. Gray areas are inside the language, white are outside of it. Four languages are depicted : 
   $\lang_a$: equivalence classes are entirely inside or outside \textbf{a stutter insensitive language}, 
    $\lang_b$: the "bottom" of an equivalence class may belong to \textbf{a shortening insensitive language},  
    $\lang_c$: the "top" of an equivalence class may belong to \textbf{a lengthening insensitive language},  
    $\lang_d$: some languages are neither lengthening insensitive nor shortening insensitive.
    }
    \label{fig:lang}
\end{subfigure}
\end{figure}

\section{Definitions}
\label{sec:defs}

In this section we first introduce in Section~\ref{ss:shorterthan} a "shorter than" partial order relation on infinite words, based on the number of repetitions or stutter in the word. 
This partial order gives us in Section~\ref{ss:shortlong} the notions of shortening and lengthening insensitive language, which are shown to be weaker versions of classical stutter insensitivity in Section~\ref{ss:relationSI}.
We then define in Section~\ref{ss:checkred} the \emph{reduction of a language} which contains a shorter representative of each word in the original language. 
Finally we show that we can use a semi-decision procedure to verify shortening or lengthening insensitive properties using a reduction of a system.

\subsection{A "Shorter than" relation for infinite words}
\label{ss:shorterthan}

\begin{definition}
    [Word]: A word over a finite alphabet $\Sigma$ is
    an infinite sequence of symbols in $\Sigma$.
    We canonically denote a word $r$ using one of the two forms:  
    \begin{itemize}
        \item (plain word) $r=a_0^{n_0}a_1^{n_1}a_2^{n_2}\ldots$ with for all $i \in \nat$, $a_i \in \Sigma$, $n_i \in \nat^+$ and $a_i \neq a_{i+1}$, or
        \item ($\omega$-word) $r=a_0^{n_0}a_1^{n_1}\ldots a_k^\omega$ with $k \in \nat$ and for all $0 \leq i \leq k$, $a_i \in \Sigma$, and for $i<k$, $n_i \in \nat^+$ and $a_i \neq a_{i+1}$. $a_k^\omega$ represents an infinite stutter on the final symbol $a_k$ of the word.
    \end{itemize}
    The set of all words over alphabet $\Sigma$ is denoted $\Sigma^\omega$.
\end{definition}

These notations using a power notation for repetitions of a symbol in a word are introduced to highlight stuttering. We force the symbols to alternate to ensure we have a canonical representation: with $\sigma$ a suffix (not starting by symbol $b$), the word $aab\sigma$ must be represented as $a^2b^1\sigma$ and not $a^1a^1b^1\sigma$. 
To represent a word of the form $aabbcccccc\ldots$ we use an $\omega$-word: $a^2b^2c^\omega$.

\begin{definition}
    [Shorter than]: A plain word  $r=a_0^{n_0}a_1^{n_1}a_2^{n_2}\ldots$ is \emph{shorter} than a plain word $r'=a_0^{n_0'}a_1^{n_1'}a_2^{n_2'}\ldots$ if and only if for all $i \in \nat$, $0 < n_i \leq n_i'$. 
    For two $\omega$-words $r=a_0^{n_0}\ldots a_k^\omega$ and $r'=a_0^{n_0'}\ldots a_k^\omega$, $r$ is \emph{shorter} than $r'$ if and only if for all $i < k$, $n_i \leq n_i'$.
    
    We denote this relation on words as $r \pre r'$. 
\end{definition}

For instance, for any given suffix $\sigma$, $a b \sigma \pre a^2 b \sigma$. Note that $a b \sigma \pre a b^2 \sigma$ as well, but that $a^2 b \sigma$ and $a b^2 \sigma$ are incomparable. $\omega$-words are incomparable with plain words. 

\begin{property}
    The $\pre$ relation is a partial order on words. 
\end{property}

\begin{proof}
The relation is clearly reflexive ($\forall r \in \Sigma^\omega, r \pre r$), anti-symmetric ($\forall r, r' \in \Sigma^\omega$, $r \pre r' \land r' \pre r \implies r = r'$) and transitive ($\forall r, r',r'' \in \Sigma^\omega, r \pre r' \land r' \pre r'' \implies r \pre r''$). The order is partial since some words (such as $a^2 b \sigma$ and $a b^2 \sigma$ presented above) are incomparable.
\qed
\end{proof}

\begin{definition}
\label{def:stutterequiv}
    [Stutter equivalence]: a word  $r$ is \emph{stutter equivalent} to $r'$, denoted as $r \sim r'$ if and only if there exists a shorter word $r''$ such that $r'' \pre r \land r'' \pre r'$. This relation $\sim$ is an equivalence relation thus partitioning words of $\Sigma^\omega$ into equivalence classes.
    
    We denote $\hat{r}$ the equivalence class of a word $r$ and denote $\ubar{\hat{r}}$ the shortest word in that equivalence class.
\end{definition}

For any given word $r=a_0^{n_0}a_1^{n_1}a_2^{n_2}\ldots$ there is a shortest representative in $\hat{r}$ that is the word $\ubar{\hat{r}}=a_0 a_1 a_2\ldots$ where no symbol is ever consecutively repeated more than once (until the $\omega$ for an $\omega$-word). By definition all words that are comparable to $\ubar{\hat{r}}$ are stutter equivalent to each other, since $\ubar{\hat{r}}$ can play the role of $r''$ in the definition of stutter equivalence, giving us an equivalence relation: it is reflexive, symmetric and transitive.

For instance, with $\sigma$ denoting a suffix, $\ubar{\hat{r}}=a b \sigma$ would be the shortest representative of any word of $\hat{r}$ of the form $a^{n_0}b^{n_1}\sigma$. We can see by this definition that despite being incomparable, $a^2 b \sigma \sim a b^2 \sigma$ since $a b \sigma \pre a^2 b \sigma$ and $a b \sigma \pre a b^2 \sigma$.

\subsection{Sensitivity of a language to the length of words}
\label{ss:shortlong}

\begin{definition}
    [Language]: a language $\lang$ over a finite alphabet $\Sigma$ is a set of words over $\Sigma$, hence $\lang \subseteq \Sigma^\omega$. We denote $\bar{\lang} =  \Sigma^\omega \setminus \lang$ the complement of a language $\lang$.
\end{definition}

In the literature, most studies that exploit a form of stuttering are focused on \emph{stutter insensitive} languages \cite{porBook,Valmari90,Peled94,GodefroidW94,HPP06}.
 In a stutter insensitive language $\lang$, duplicating any letter (also called stuttering) or removing any duplicate letter from a word of $\lang$ must produce another word of $\lang$.
In other words, all stutter equivalent words in a class $\hat{r}$ must be either in the language or outside of it. Let us introduce weaker variants of this property, original in this paper.

\begin{definition}
\label{def:shortins}
    [Shortening insensitive]: a language $\lang$ is \emph{shortening insensitive} if and only if for any word $r$ it contains, all shorter words $r'$ such that $r' \pre r$ are also in $\lang$. 
\end{definition}

For instance, a shortening insensitive language $\lang$ that contains the word $a^3 b \sigma$ must also contain shorter words $a^2 b \sigma$, and $a b \sigma$. If it contains $a^2 b^2 \sigma$ it also contains $a^2 b \sigma$, $a b^2 \sigma$ and $a b \sigma$.

\begin{definition}
\label{def:longins}
    [Lengthening insensitive]: a language $\lang$ is \emph{lengthening insensitive} if and only if for any word $r$ it contains, all longer words $r'$ such that $r \pre r'$ are also in $\lang$. 
\end{definition}

For instance, a lengthening insensitive language $\lang$ that contains the word $a^2 b \sigma$ must also contain all longer words $a^3 b \sigma$, $a^2 b^2 \sigma$ \ldots, and more generally words of the form $a^{n} b^{n'} \sigma$ with $n \geq 2$ and $n' \geq 1$. If it contains $\ubar{\hat{r}}=a b \sigma$ the shortest representative of an equivalence class, it contains all words in the stutter equivalence class.

While stutter insensitive languages have been heavily studied, there is to our knowledge no study on what reductions are possible if only one direction holds, i.e. the language is shortening or lengthening insensitive, but not both. A shortening insensitive language is essentially asking for something to happen \emph{before} a certain deadline or stuttering "too much". A lengthening insensitive language is asking for something to happen \emph{at the earliest} at a certain date or after having stuttered at least a certain number of times. Figure~\ref{fig:lang} represents these situations graphically.


\subsection{Relationship to stutter insensitive logic}
\label{ss:relationSI}

A language is both shortening and lengthening insensitive if and only if it is stutter insensitive (see Fig.~\ref{fig:lang}).
This fact is already used in~\cite{ADL15} to identify such stutter insensitive languages using only their automaton.
Furthermore since stutter equivalent classes of runs are entirely inside or outside a stutter insensitive language, 
a language $\lang$ is stutter insensitive if and only if the complement language $\bar{\lang}$ is stutter insensitive.

However, if we look at sensitivity to length and how it interacts with the complement operation, we find a dual relationship where the complement of a shortening insensitive language is lengthening insensitive and vice versa. 

\begin{property}
\label{prop:shortduallong}
A language $\lang$ is shortening insensitive if and only if the complement language $\bar{\lang}$ is lengthening insensitive.
\end{property}

\begin{proof}
Let $\lang$ be shortening insensitive. Let $r \in \bar{\lang}$ be a word in the complement of $\lang$. Any word $r'$ such that $r \pre r'$ must also belong to $\bar{\lang}$, since if it belonged to the shortening insensitive $\lang$, $r$ would also belong to $\lang$. Hence $\bar{\lang}$ is lengthening insensitive. The converse implication can be proved using the same reasoning.
\qed
\end{proof}

If we look at Figure~\ref{fig:lang}, the dual effect of complement on the sensitivity of the language to length is  apparent: if gray and white are switched we can see $\bar{\lang_b}$ is lengthening insensitive and $\bar{\lang_c}$ shortening insensitive.

\subsection{When is visiting shorter words enough?}
\label{ss:checkred}

\begin{definition}
\label{def:reduction}
    [Reduction] 
    Let $I$ be a reduction function $\Sigma^\omega \mapsto \Sigma^\omega$ such that $\forall r, I(r) \pre r$. The reduction by $I$ of a language $\lang$ is $Red_I(\lang)=\{I(r) \mid r \in \lang\}$. 
\end{definition}


Note that the $\pre$ partial order is not strict so that the image of a word may be the word itself, hence identity is a reduction function. In most cases however we expect the reduction function to map many words $r$ of the original language to a single shorter word $r'$ of the reduced language. 
Note that given any two reduction functions $I$ and $I'$, $Red_I(Red_{I'}(\lang))$ is still a reduction of $\lang$. Hence chaining reduction rules still produces a reduction.
As we will discuss in Section~\ref{ss:kripke} structural reductions of a specification such as Lipton's transaction reduction~\cite{Lipton75,Laarman18} or Petri net agglomerations~\cite{Berthelot85,YTM20} induce a reduction at the language level.
In Fig.~\ref{fig:example} fusing statements into a single atomic step in the program induces a reduction of the language.

\begin{theorem}
[Reduced Emptiness Checks]
\label{th:short}
Given two languages $\lang$ and $\lang'$, 
\begin{itemize}
    \item if $\lang$ is shortening insensitive, then $\lang \cap Red_I(\lang') = \emptyset \implies \lang \cap \lang' = \emptyset$
    \item if $\lang$ is lengthening insensitive, then $\lang \cap Red_I(\lang') \neq \emptyset \implies \lang \cap \lang' \neq \emptyset$.
\end{itemize}
\end{theorem}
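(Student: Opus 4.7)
The plan is to prove both items directly from the two definitions at play: the definition of a reduction function (Definition~\ref{def:reduction}), which guarantees $I(r) \pre r$ for every word $r$, and the definitions of shortening/lengthening insensitivity (Definitions~\ref{def:shortins} and~\ref{def:longins}), which let us transport membership in $\lang$ along the $\pre$ order in one direction.

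For the first item, I would proceed by contrapositive. Assume $\lang \cap \lang' \neq \emptyset$, and pick some witness $r \in \lang \cap \lang'$. Since $r \in \lang'$, the image $I(r)$ belongs to $Red_I(\lang')$ by definition. Since $I$ is a reduction, $I(r) \pre r$, and since $r \in \lang$ and $\lang$ is shortening insensitive, Definition~\ref{def:shortins} yields $I(r) \in \lang$. Hence $I(r) \in \lang \cap Red_I(\lang')$, contradicting the hypothesis that this intersection is empty. This gives the desired implication.

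For the second item, I would argue directly. Suppose $\lang \cap Red_I(\lang') \neq \emptyset$ and take any $r' \in \lang \cap Red_I(\lang')$. By definition of $Red_I(\lang')$, there exists $r \in \lang'$ with $r' = I(r)$, and by definition of a reduction $r' = I(r) \pre r$. Since $\lang$ is lengthening insensitive and $r' \in \lang$ with $r' \pre r$, Definition~\ref{def:longins} gives $r \in \lang$. Hence $r \in \lang \cap \lang'$, which is therefore non-empty.

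I do not expect a serious obstacle: both parts are essentially one-line manipulations once the right witness ($I(r)$ for the first part, the preimage $r$ of $r'$ for the second) is named. The only subtlety worth double-checking is that the definitions really do apply as stated when $I(r) = r$ (the identity case allowed by Definition~\ref{def:reduction}), but reflexivity of $\pre$ makes both arguments go through unchanged. As an optional remark, the two items are in fact mirror images via Property~\ref{prop:shortduallong}: applying the first item to $\bar{\lang}$ (which is shortening insensitive when $\lang$ is lengthening insensitive) and rewriting emptiness of the intersection as containment in the complement yields the second item, so an alternative presentation could prove only one and derive the other.
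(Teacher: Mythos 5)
Your proof is correct and follows essentially the same argument as the paper's, which is just a more compressed version of the same reasoning (naming $I(r)$ as the witness for the first item and the preimage $r$ of $r'$ for the second). Your write-up is in fact more careful than the paper's, and the closing remark about deriving one item from the other via Property~\ref{prop:shortduallong} is a valid observation, though not needed.
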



\begin{proof}
(Shortening insensitive $\lang$)$\lang \cap Red_I(\lang') = \emptyset$ so there does not exists $r' \in \lang \cap Red_I(\lang')$. Because $\lang$ is shortening insensitive, it is impossible that any run $r$ with $r' \prec r$ belongs to $\lang \cap \lang'$.
(Lengthening insensitive $\lang$) At least one word $r'$ is in $\lang$ and $Red_I(\lang)$. Therefore the longer word $r$ of $\lang'$ that $r'$ represents is also in $\lang$
since the language is lengthening insensitive.

\qed
\end{proof}

With this theorem original to this paper we now can build a semi-decision procedure that is able to prove \emph{some} lengthening or disprove \emph{some} shortening insensitive properties using a reduction of a system. 

\section{Application to Verification}
\label{sec:red}
We now introduce the more concrete setting of LTL verification to exploit the theoretical results on languages and their shortening/lengthening sensitivity developed in Section~\ref{sec:defs}.

\subsection{Kripke Structure}
\label{ss:kripke}

From the point of view of LTL verification with a state-based logic, executions of a system (also called \emph{runs}) are seen as infinite words over the alphabet $\Sigma=2^{\AP}$, where $\AP$ is a set of atomic propositions that may be true or false in each state. So each symbol in a run gives the truth value of all of the atomic propositions in that state of the execution, and each time an action happens we progress in the run to the next symbol. Some actions of the system update the truth value of atomic propositions, but some actions can leave them unchanged which corresponds to stuttering. 

\begin{definition}
[Kripke Structure Syntax] Let $\AP$ designate a set of atomic propositions. 
A Kripke structure $\KS_\AP=\tuple{S,R,\lambda,s_0}$ over $\AP$ is a tuple where $S$ is the finite set of states, $R \subseteq S \times S$ is the transition relation, $\lambda : S \mapsto 2^\AP$ is the state labeling function, and $s_0 \in S$ is the initial state.
\end{definition}
\begin{definition}[Kripke Structure Semantics]
\label{def:kripke}
The language $\lang(\KS_\AP)$ of a Kripke structure $\KS_\AP$ is defined over the alphabet $2^\AP$.
It contains all runs of the form $r=\lambda(s_0) \lambda(s_1) \lambda(s_2)\ldots$ where $s_0$ is the initial state of $\KS_\AP$ and $\forall i \in \nat$, either $(s_i,s_{i+1}) \in R$, or if $s_i$ is a deadlock state such that $\forall s' \in S, (s_i,s') \not\in R$ then $s_{i+1}=s_i$.
\end{definition}

All system executions are considered maximal, so that they are represented by infinite runs. If the system can deadlock or terminate in some way, we can extend these finite words by an $\omega$ stutter on the last symbol of the run to obtain a run.

Subfigure (1) of Figure~\ref{fig:example} depicts a program where each thread ($\alpha$ and $\beta$) has  three reachable positions (we consider that each instruction is atomic).
In this example we consider that the logic only observes two atomic propositions $p$ (true when $x=0$) and $q$ (true when $y=0$). The variable $z$ is not observed.

Subfigure (2) of Figure~\ref{fig:example} depicts the reachable states of this system as a Kripke structure.
Actions of thread $\beta$ (which do not modify the value of $p$ or $q$) are horizontal while actions of thread $\alpha$ are vertical. While each thread has 3 reachable positions, the emission of the message by $\beta$ must precede the reception by $\alpha$ so that some situations are unreachable.
Based on Definition~\ref{def:kripke} that extends by an infinite stutter runs that end in a deadlock, we can compute the language $\lang_A$ of this system. It consists in three parts: when thread $\beta$ goes first $pq^3$ $\bar p q$ $\bar p \bar q ^\omega$, with an interleaving $pq^2$ $\bar p q^2$ $\bar p \bar q ^\omega$, and when thread $\alpha$ goes first $pq$ $\bar p q^3$ $\bar p \bar q ^\omega$.

In subfigure (3) of Figure~\ref{fig:example}, 
the actions "$z=40;chan.send(z);$"of thread $\beta$ are fused into a single atomic operation.
This is possible because action $z=40$ of thread $\beta$ is stuttering (it cannot affect either $p$ or $q$) and is non-interfering with other events (it neither enables nor disables any event other than subsequent instruction "chan.send(z)"). 
 The language of this smaller KS is a reduction of the language of the original system. It contains two runs: thread $\alpha$ goes first  $pq$ $\bar p q^2$ $\bar p \bar q ^\omega$ and thread $\beta$ goes first $pq^2$ $\bar p q$ $\bar p \bar q ^\omega$.

In subfigure (4) of Figure~\ref{fig:example}, 
the already fused action "$z=40;chan.send(z);$"of thread $\beta$ is further fused with the \textit{chan.recv();} action of thread $\alpha$.
This leads to a smaller KS whose language is still a reduction of the original system now containing a single run: $pq$ $\bar p q$ $\bar p \bar q ^\omega$.
This simple example shows the power of structural reductions when they are applicable, with a drastic reduction of the initial language.

\subsection{Automata theoretic verification}

Let us consider the problem of model-checking of an $\omega$-regular  property $\varphi$ (such as LTL) on a system using the automata-theoretic approach~\cite{Vardi07}. In this approach, we wish to answer the problem of language inclusion: do all runs of the system $\lang(\KS)$ belong to the language of the property $\lang(\varphi)$ ? To do this, when the property $\varphi$ is an omega-regular language (e.g. an LTL or PSL formula), we first negate the property $\lnot\varphi$, then build a (variant of) a Büchi automaton $A_{\lnot\varphi}$ whose language\footnote{Because computing the complement $\bar{A}$ of an automaton $A$ is exponential in the worst case, syntactically negating $\varphi$ and producing an automaton $A_{\lnot \varphi}$ is preferable when $A$ is derived from e.g. an LTL formula.} 
 consists of runs that are not in the property language $\lang(A_{\lnot\varphi}) = \Sigma^\omega \setminus \lang(\varphi)$.
We then perform a synchronized product between this Büchi automaton and the Kripke structure $\KS$ corresponding to the system's state space $A_{\lnot\varphi} \otimes \KS$ (where $\otimes$ is defined to satisfy $\lang(A \otimes B)=\lang(A) \cap \lang(B)$). Either the language of the product is empty $\lang(A_{\lnot\varphi} \otimes \KS) = \emptyset$, and the property $\varphi$ is thus true of this system, or the product is non empty, and from any run in the language of the product we can build a counter-example to the property.

We will consider in the rest of the paper that the shortening or lengthening  insensitive language of definitions~\ref{def:shortins} and~\ref{def:longins} is given as an omega-regular language or Büchi automaton typically obtained from the negation of an LTL property, and that the reduction of definition~\ref{def:reduction} is applied to a language that corresponds to all runs in a Kripke structure typically capturing the state space of a system.

\medskip
\textbf{LTL verification with reductions.} With theorem~\ref{th:short}, a shortening insensitive property shown to be true on the reduction (empty intersection with the language of the negation of the property) is also true of the original system. A lengthening insensitive property shown to be false on the reduction (non-empty intersection with the language of the negation of the property, hence counter-examples exist) is also false in the original system.
Unfortunately, our procedure cannot prove using a reduction that a shortening insensitive property is false, or that a lengthening insensitive property is true. We offer a semi-decision procedure.

\subsection{Detection of language sensitivity}

We now present a strategy to decide if a given property expressed as a Büchi automaton is shortening insensitive, lengthening insensitive, or both.

This section relies heavily on the operations introduced and discussed at length in~\cite{ADL15}. 
The authors define two \emph{syntactic} transformations $sl$ and $cl$ of a transition-based generalized Büchi automaton (TGBA) $A_\varphi$ that can be built from any LTL formula $\varphi$ to represent its language $\lang(\varphi)=\lang(A_\varphi)$~\cite{CouvreurFM99}. TGBA are a variant of Büchi automata where the acceptance conditions are placed on edges rather than states of the automaton. 

The $cl$ closure operation \emph{decreases stutter}, it adds to the language any word $r' \in \Sigma^\omega$ that is shorter than a word $r$ in the language. Informally, the strategy consists in detecting when a sequence $q_1 \tr{a} q_2 \tr{a} q_3$ is possible and adding an edge $q_1 \tr{a} q_3$, hence its name $cl$ for "closure". 
The $sl$ self-loopization operation \emph{increases stutter}, it adds to the language any run $r' \in \Sigma^\omega$ that is longer than a run $r$ in the language.
Informally, the strategy consists in adding a self-loop to any state labeled with all outgoing expressions so that we can always decide to repeat a letter rather than progress in the automaton, hence its name $sl$ for "self-loop". 
More formally $\lang(cl(A_\varphi))=\{r' \mid \exists r \in \lang(A_\varphi), r' \pre r \}$ and $\lang(sl(A_\varphi))=\{r' \mid \exists r \in \lang(A_\varphi), r \pre r' \}$.

Using these operations \cite{ADL15} shows that there are several possible ways to test if an omega-regular language (encoded as a Büchi automaton) is stutter insensitive: 
essentially applying either of the operations $cl$ or $sl$ should leave the language unchanged. This allows to recognize that a property is stutter insensitive even
though it syntactically contains e.g. the neXt operator of LTL.

For instance $A_\varphi$ is stutter insensitive if and only if $\lang(sl(cl(A_\varphi)) \otimes A_{\lnot \varphi}) =\emptyset$.
The full test is thus simply reduced to a language emptiness check testing that both $sl$ and $cl$ operations
leave the language of the automaton unchanged.

Indeed for stutter insensitive languages, all or none of the runs belonging to a given stutter equivalence class of runs $\hat{r}$ must belong to the language $\lang(A_\varphi)$. 
In other words, if shortening or lengthening a run can make it switch from belonging to $A_\varphi$ to belonging to $A_{\lnot \varphi}$, the language is stutter sensitive. 
This is apparent on Figure~\ref{fig:lang}

We want weaker conditions here, but we can reuse the $sl$ and $cl$ operations developed for testing stutter insensitivity.
Indeed for an automaton $A$ encoding a shortening insensitive language, $\lang(cl(A))=\lang(A)$ should hold.
Conversely if $A$ encodes a lengthening insensitive language, $\lang(sl(A))=\lang(A)$ should hold.
We express these tests as emptiness checks on a product in the following way.

\begin{theorem}
[Testing sensitivity]
Let $A$ designate a Büchi automaton, and $\bar{A}$ designate its complement.

$\lang(cl(A) \otimes \bar{A}) = \emptyset$, if and only if $A$ defines a shortening insensitive language.

$\lang(sl(A) \otimes \bar{A}) = \emptyset$ if and only if $A$ defines a lengthening insensitive language.
\end{theorem}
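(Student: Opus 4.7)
My plan is to reduce each of the two biconditionals to a chain of set-theoretic equivalences that unfolds the definitions of product, complement, the $cl$ and $sl$ operations, and shortening/lengthening insensitivity. Nothing deep is required beyond careful bookkeeping, since every needed semantic characterization has already been fixed earlier in the paper.

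For the first statement, I would first rewrite the emptiness check purely at the language level. By the assumed property $\lang(A \otimes B) = \lang(A) \cap \lang(B)$ and the fact that $\lang(\bar A) = \Sigma^\omega \setminus \lang(A)$, the condition $\lang(cl(A) \otimes \bar{A}) = \emptyset$ is equivalent to $\lang(cl(A)) \cap (\Sigma^\omega \setminus \lang(A)) = \emptyset$, i.e.\ to the inclusion $\lang(cl(A)) \subseteq \lang(A)$. Next I would substitute the characterization $\lang(cl(A)) = \{r' \mid \exists r \in \lang(A),\ r' \pre r\}$ recalled from~\cite{ADL15}, turning the inclusion into the statement: for every word $r'$, whenever there exists $r \in \lang(A)$ with $r' \pre r$, one has $r' \in \lang(A)$. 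Reindexing the quantifiers, this reads as ``for every $r \in \lang(A)$ and every $r' \pre r$, $r' \in \lang(A)$'', which is literally Definition~\ref{def:shortins}. Hence $A$ defines a shortening insensitive language.

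The second statement is proved by the exact same mechanical unfolding, replacing $cl$ by $sl$ and shortening by lengthening. The emptiness check $\lang(sl(A) \otimes \bar{A}) = \emptyset$ is equivalent to $\lang(sl(A)) \subseteq \lang(A)$, and the characterization $\lang(sl(A)) = \{r' \mid \exists r \in \lang(A),\ r \pre r'\}$ turns this into ``for every $r \in \lang(A)$ and every $r'$ with $r \pre r'$, $r' \in \lang(A)$'', which is Definition~\ref{def:longins}. Note that the reverse inclusions $\lang(A) \subseteq \lang(cl(A))$ and $\lang(A) \subseteq \lang(sl(A))$ hold trivially by reflexivity of $\pre$, so in each case the proved inclusion is in fact an equality, matching the informal phrasing ``$\lang(cl(A)) = \lang(A)$'' and ``$\lang(sl(A)) = \lang(A)$'' used earlier in the section.

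The argument is essentially a definitional unfolding, so I do not anticipate a genuine technical obstacle. The only point requiring slight care is the direction from the semantic characterization of $cl$ and $sl$ back to the syntactic automata transformations of~\cite{ADL15}: I would rely on the cited result that these transformations realize exactly the language-level operations stated, rather than re-proving them here. Modulo that citation, the biconditionals follow by a single chain of equivalences in each case.
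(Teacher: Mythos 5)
Your proof is correct and follows the same route as the paper, which merely states that the emptiness check is equivalent to $\lang(cl(A))=\lang(A)$ and leaves the unfolding implicit; you have simply made explicit the reduction of emptiness of the product to the inclusion $\lang(cl(A)) \subseteq \lang(A)$, the trivial reverse inclusion by reflexivity of $\pre$, and the match with Definitions~\ref{def:shortins} and~\ref{def:longins} via the cited characterizations of $cl$ and $sl$. No further comment is needed.
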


\begin{proof}
The expression $\lang(cl(A) \otimes \bar{A}) = \emptyset$ is equivalent to $\lang(cl(A))=\lang(A)$. The lengthening insensitive case is similar.
\qed
\end{proof}

Thanks to property~\ref{prop:shortduallong}, and in the spirit of~\cite{ADL15} we could also test the complement of a language for the dual property if that is more efficient, i.e. $\lang(sl(\bar{A}) \otimes A) = \emptyset$ if and only if $A$ defines a shortening insensitive language and similarly $\lang(cl(\bar{A}) \otimes A) = \emptyset$ iff $A$ is lengthening insensitive. We did not really investigate these alternatives as the complexity of the test was already negligible in all of our experiments.


\subsection{Agglomeration of events produces shorter runs}

Among the possible strategies to reduce the complexity of analyzing a system are structural reductions.
Depending on the input formalism the terminology used is different, but the main results remain stable.

In~\cite{Lipton75} transaction reduction consists in fusing two adjacent actions of a thread (or even across threads in recent versions such as~\cite{Laarman18} ). 
The first action must not modify atomic properties and must be commutative with any action of other threads. Fusing these actions leads to shorter runs, where a stutter is lost. 
In the program of Fig.~\ref{fig:example}, "z=40" is enabled from the initial state and must happen before "chan.send(z)", but it commutes with instructions of thread $\alpha$ and is not observable. 
Hence the language $\lang_B$ built with an atomic assumption on "z=40;chan.send(z)" is indeed a reduction of $\lang_A$.

Let us reason at the level of a Kripke structure.
The goal of such reductions is to structurally detect the following situation in language $\lang$: let $r=a_0^{n_0}a_1^{n_1}a_2^{n_2}\ldots$ designate a run (not necessarily in the language), there must exist two indexes $i$ and $j$ such that for any natural number $k$, $i \leq k \leq j$, $r_k=a_0^{n_0}\ldots a_i^{n_i}\ldots a_k^{n_k + 1} \ldots a_j^{n_j} \ldots$ is in the language. In other words, the set of runs described as : $\{ r_k=a_0^{n_0}\ldots a_i^{n_i}\ldots a_k^{n_k + 1} \ldots a_j^{n_j} \ldots \mid i \leq k \leq j\}$ must belong to the language. 
This corresponds to an event that does not impact the truth value of atomic propositions (it stutters) and can be freely commuted with any event that occurs between indexes $i$ and $j$ in the run. This event is simply constrained to occur at the earliest at index $i$ in the run and at the latest at index $j$. In Fig~\ref{fig:example} the event "z=40" can happen as early as in the initial state, and must occur before "chan.send(z)" and thus matches this definition.

Note that these runs are all stutter equivalent, but are incomparable by the shorter than relation (e.g. $aabc, abbc, abcc$ are incomparable). In this situation, a \emph{reduction} can choose to only represent the run $r$ instead of any of these runs. This run was not originally in the language in general, but it is indeed shorter than any of the $r_k$ runs so it matches definition~\ref{def:reduction} for a reduction. Note that $\hat{r}$ does contain all these longer runs so that in a stutter insensitive context, examining $r$ is enough to conclude for any of these runs. This is why usage of structural reductions is compatible with verification of a logic such as $LTL_{\setminus X}$ and has been proposed for that express purpose in the literature~\cite{HPP06,Laarman18}.

Thus transaction reductions~\cite{Lipton75,Laarman18} as well as  
both pre-agglomeration and post-agglomeration of Petri nets~\cite{PPP00,EHPP05,HPP06,YTM20} produce a system whose language is a reduction of the language of the original system. 

For lack of space, in this paper we decided not to provide proofs that these structural transformation rules induce reductions at the language level.
A formal definition involves a) introducing the syntax of a formalism and b) its semantics in terms of language, then c) defining the reduction rule, and d) proving its effect  is a reduction at the language level. The exercise is not particularly difficult, and the definition of reduction rules mostly fall into the category above, where a non observable event that happens at the earliest at point $a$ and at the latest at point $b$ is abstracted from the trace.

Our experimental Section~\ref{ss:modelchecking} uses the rules of~\cite{YTM20} for (potentially partial) pre and post-agglomeration.
That paper presents $22$ structural reductions rules from which we selected the rules valid in the context of LTL verification. Only one rule preserving stutter insensitive LTL was not compatible with our approach since it does not produce a reduction at the language level: rule $3$ "Redundant transitions" proposes that if two transitions $t_1$ and $t_2$ have the same combined effect as a transition $t$, and firing $t_1$ enables $t_2$, $t$ can be discarded from the net. This reduces the number of edges in the underlying \KS{} representing the state space, but does not affect reachability of states. However, it selects as representative a run involving both $t_1$ and $t_2$ that is longer than the one using $t$ in the original net, it is thus not legitimate to use in our strategy (although it remains valid for $LTL_{\setminus X}$). 
Rules $14$ "Pre agglomeration" and $15$ "Post agglomeration" are the most powerful rules of~\cite{YTM20} that we are able to apply in our context. They are known to preserve $LTL_{\setminus X}$ (but not full LTL) and their effect is a reduction at the language level, hence we \emph{can} use them when dealing with shortening/lengthening insensitive formulae. 

\section{Experimentation}
\label{sec:perfs}

\subsection{A Study of Properties}

This section provides an empirical study of the applicability of the techniques presented in this paper to LTL properties found in the literature. To achieve this we explored several LTL benchmarks~\cite{etessami.00.concur,somenzi.00.cav,dwyer.98.fmsp,rers21,mcc:2021}.
Some work~\cite{etessami.00.concur,somenzi.00.cav} summarises  the typical properties that users express in LTL. The formulae of this benchmark have been extracted  directly from the literature. 
Dwyer et al.~\cite{dwyer.98.fmsp}  proposes property specification patterns, expressed in several logics including LTL. These patterns have been extracted by analysing 447 formulae coming from real world projects.
The RERS challenge~\cite{rers21} presents generated formulae inspired from real world reactive systems. 
    The MCC~\cite{mcc:2021} benchmark establishes a huge database of $45152$ LTL formulae in the form of $1411$ 
    Petri net models coming from $114$ origins with for each one $32$ random LTL formulae. These formulae use up to $5$ state-based atomic propositions, limit the nesting depth of temporal operators to $5$ and are filtered in order to be non trivial. Since these formulae come with a concrete system we were able to use this benchmark to also provide performance results for our approach in Section~\ref{ss:modelchecking}. We retained $43989$ model/formula pairs from this benchmark, the missing $1163$ were rejected due to parse limitations of our tool when the model size is excessive ($>10^7$ transitions).
This set of roughly 2200 human-like formulae and 44k random ones lets us evaluate if the fragment of LTL that we consider is common in practice. 
Table~\ref{tab:formulae} summarizes, for each benchmark, the number and percentage of formulae that are either stuttering insensitive, lengthening insensitive, or shortening insensitive. 
The sum of both shortening and lengthening  formulae represents more than one third (and up to 60 percent) of the formulae of these benchmarks. 

Concerning the polarity, although lengthening insensitive formulae seem to appear more frequently, 
most of these benchmarks actually contain each formula in both positive and negative forms (we retained only one) so that the summed percentage might be more relevant as a metric since lengthening insensitivity of $\varphi$ is equivalent to shortening insensitivity of $\lnot\varphi$.
Analysis of the human-like Dwyer patterns~\cite{dwyer.98.fmsp} reveals that shortening/lengthening insensitive formulae mostly come from the patterns \textit{precedence chain}, \textit{response chain} and \textit{constrained chain}.
These properties specify causal relation between events, that are observable as causal relations 
between \emph{observably different} states (that might be required to strictly follow each other), but this causality chain is not impacted by non observable events.


\begin{figure}[p]

    \begin{tabular}{p{2cm}p{2cm}p{2cm}p{2cm}p{2cm}p{2cm}}
        Benchmark       & Stutter Insens. & Length. Insens. & Short. Insens. & Others & Total\\
                        \cmidrule(r){1-1}\cmidrule(r){2-2}
                        \cmidrule(r){3-3}\cmidrule(r){4-4}\cmidrule(r){5-5}
                        \cmidrule(r){6-6}
         Spot~\cite{etessami.00.concur,dwyer.98.fmsp,somenzi.00.cav}   & 63 (67\%)    & 17 (18\%)   & 11 (1\%)   & 3 (3\%)   & 94 \\
         Dwyer et al.~\cite{dwyer.98.fmsp}  & 32 (58\%)     & 13  (23\%) & 9 (16\%)   & 1 (1.81\%)     & 55\\
         RERS~\cite{rers21}.  & 714 (35\%)  & 777  (38\%) & 559 (28\%) & 0 & 2050\\
         MCC~\cite{mcc:2021}    & 24462 (56\%) & 6837 (14\%) &5390 (12\%) & 7300 (16\%) & 43989 \\~\\
    \end{tabular}
    \captionof{table}{Sensitivity to length of properties measured using several LTL benchmarks.}
    \label{tab:formulae}

\vspace{2ex}

  \begin{subfigure}[t]{0.45\textwidth}
    \includegraphics[width=\textwidth]{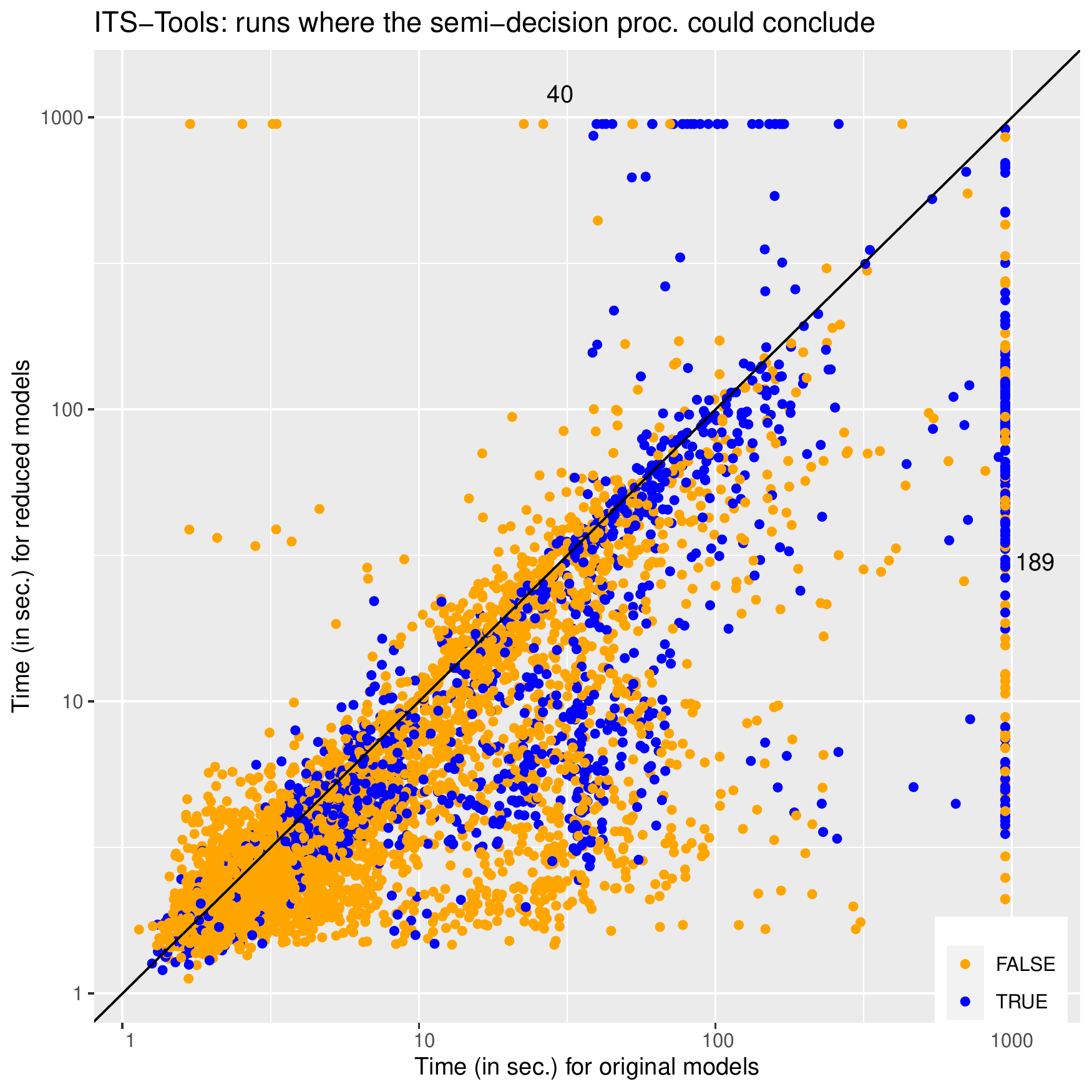}
    \caption{ITS-tools on decidable instances}
    \label{fig:1}
  \end{subfigure}
 \hfill
  \begin{subfigure}[t]{0.45\textwidth}
    \includegraphics[width=\textwidth]{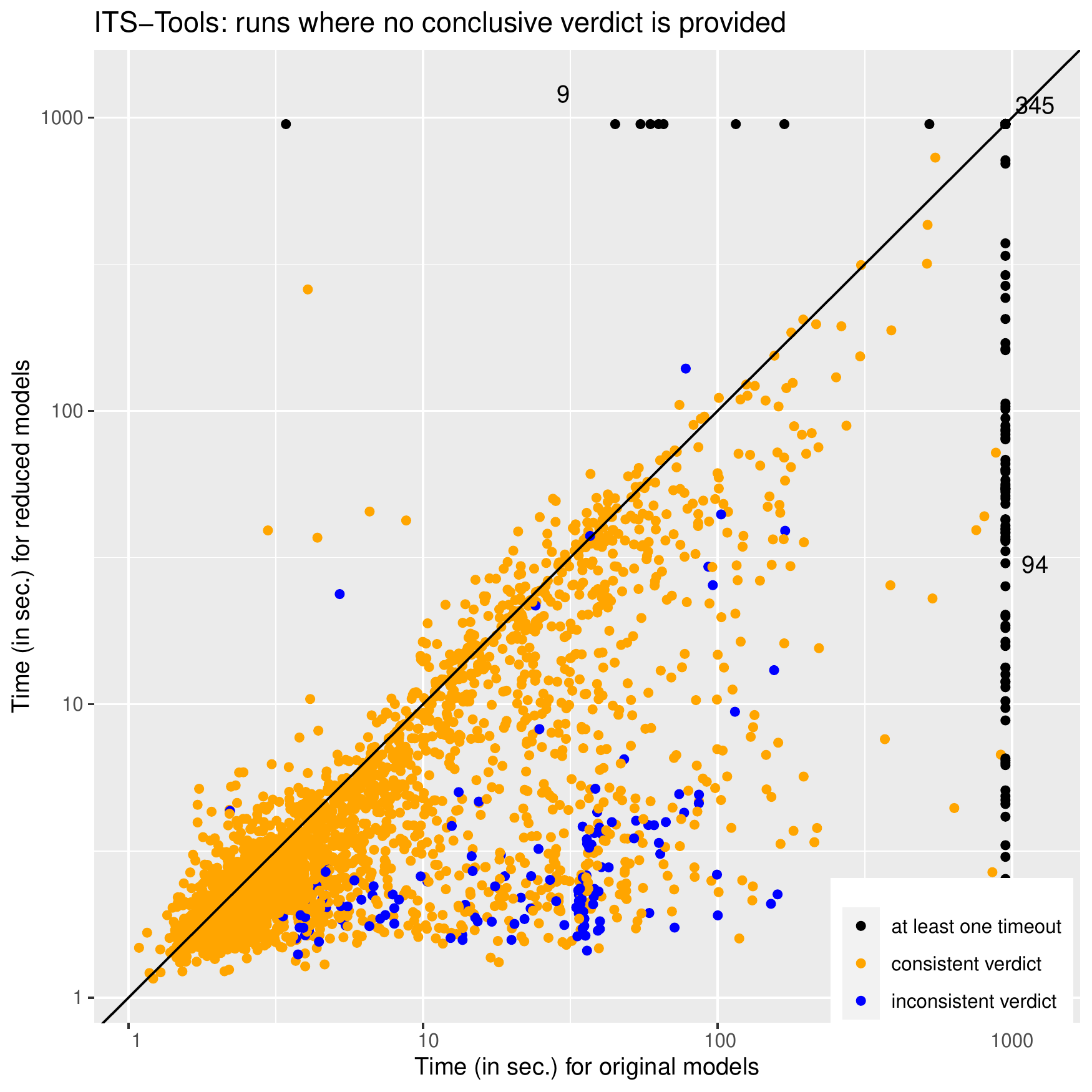}
    \caption{ITS-tools on non decidable instances}
    \label{fig:2}
  \end{subfigure}
  \begin{subfigure}[t]{0.45\textwidth}
    \includegraphics[width=\textwidth]{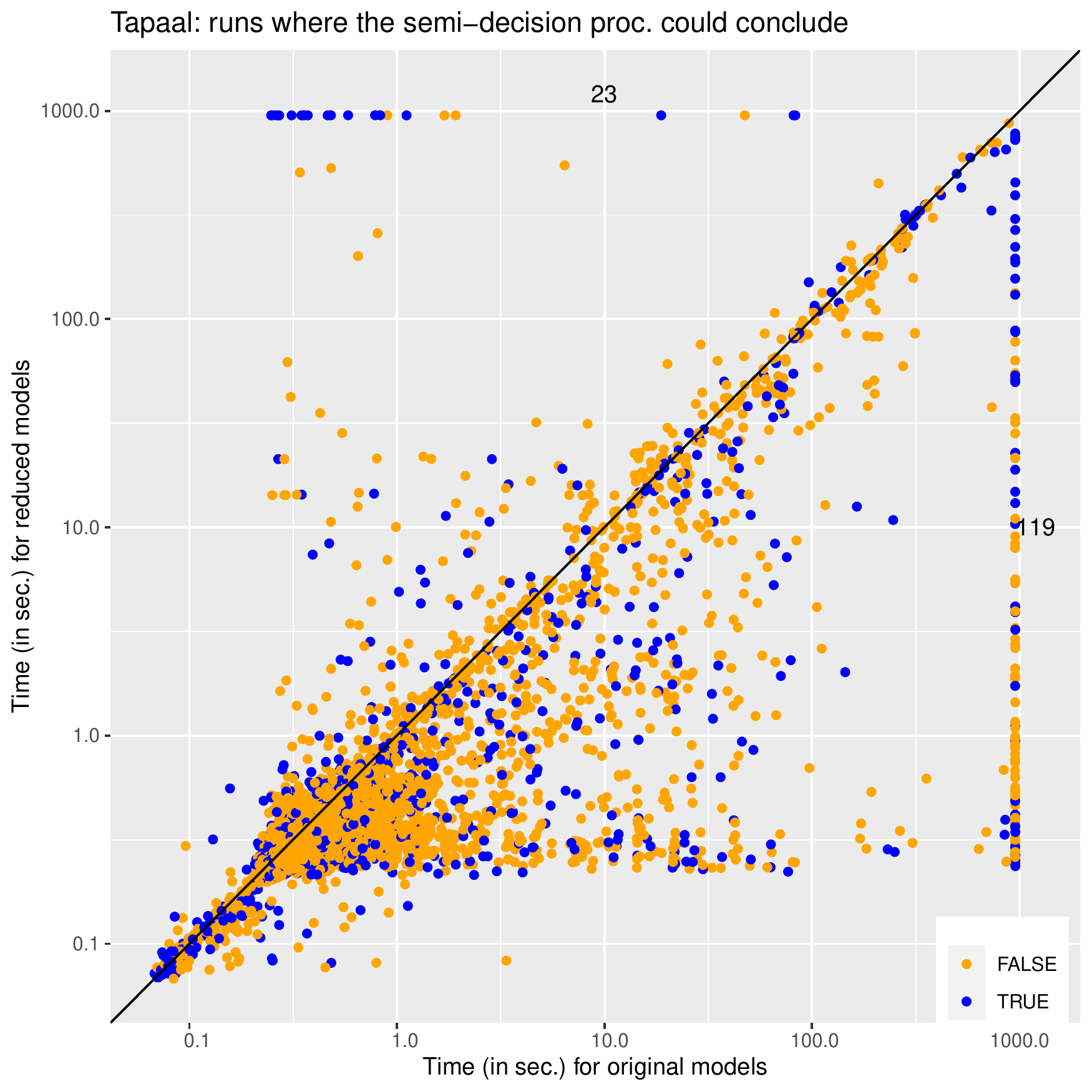}
    \caption{Tapaal on decidable instances}
    \label{fig:3}
  \end{subfigure}
 \hfill
  \begin{subfigure}[t]{0.45\textwidth}
    \includegraphics[width=\textwidth]{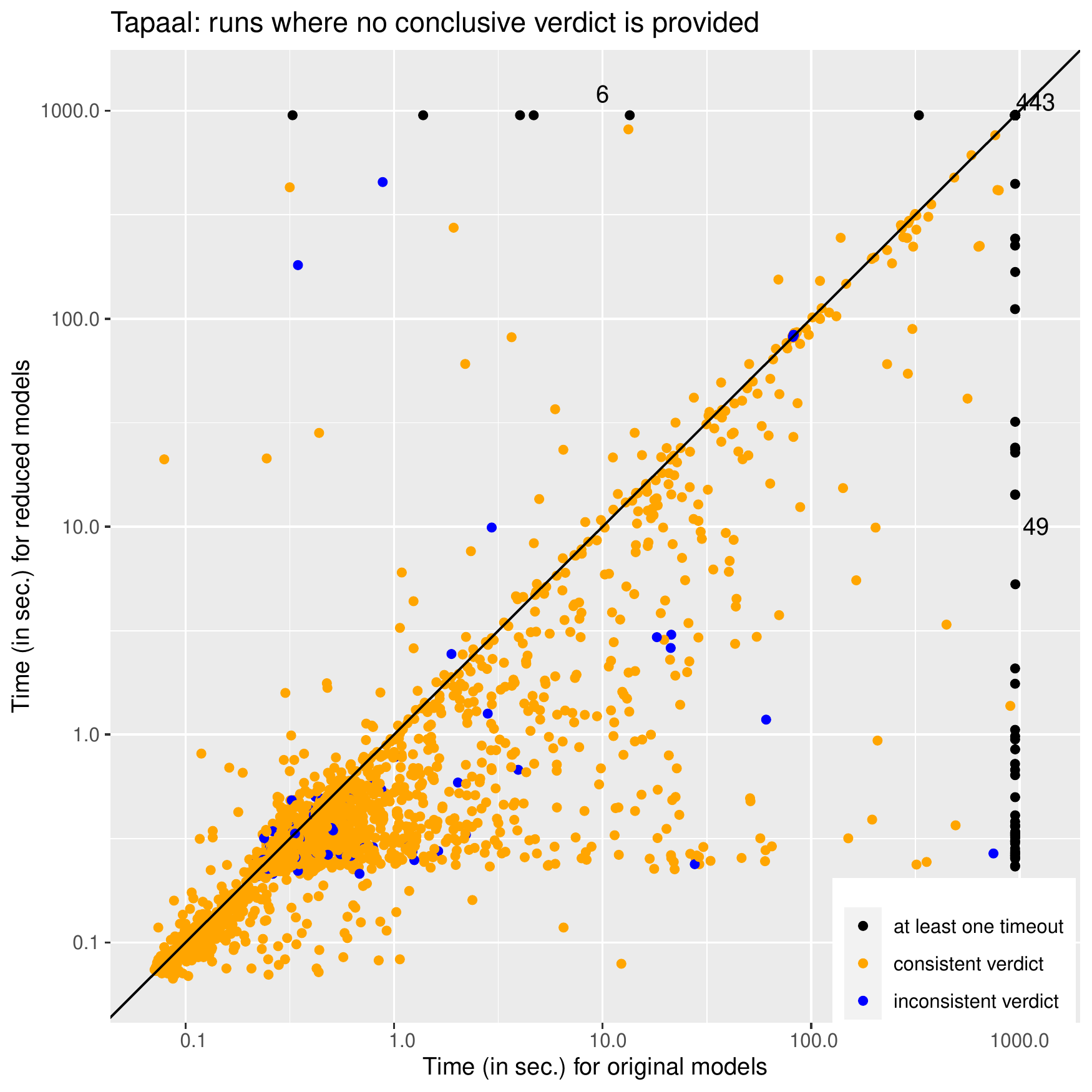}
    \caption{Tapaal on non decidable instances}
    \label{fig:4}
  \end{subfigure}
  \caption{Experiments on the MCC'2021 LTL benchmark using the two best tool of the MCC contest: Tapaal and ITS-tools. 
Figures (a) and (c) contain the cases where the verdict of the semi-decisions procedures is reliable, 
and distinguish cases where the output is True (empty product) and False (non empty product). 
(b) and (d) display the cases where the verdict is not reliable and distinguish cases where the output is
inconsistent with the ground truth from cases where they agree. 
}
\label{fig:benchmark}

\end{figure}

\subsection{A Study of Performances}
\label{ss:modelchecking}

\textbf{Benchmark Setup.}
Among the LTL benchmarks presented in Table~\ref{tab:formulae}, we opted for the MCC benchmark 
to evaluate the techniques presented in this paper. 
This benchmark seems relevant since 
(1) it contains both academic and industrial models, 
(2) it has a huge set of (random) formulae and
(3)  includes models so that we could measure the effect of the approach in a model-checking setting. 
The model-checking competition (MCC) is an annual event in its $10^{th}$ edition in 2021 where competing tools 
are evaluated on a large benchmark. 
We use the formulae and models from the latest $2021$ edition of the contest, where Tapaal~\cite{tapaal12} was awarded the gold medal
and ITS-Tools~ \cite{ITStools} was silver in the LTL category of the contest.
We evaluate both of these tools in the following performance measures, showing that our strategy is agnostic
to the back-end analysis engine. Our experimental setup consists in two steps.
\begin{enumerate}
    \item Parse the model and formula pair, and analyze the sensitivity of the formula. When the formula is 
    shortening or lengthening insensitive (but not both) output two model/formula pairs : reduced and original.
    The "original" version does also benefit from reduction rules, but we apply only rules that are compatible with full LTL.
    The "reduced" version additionally benefits from rules that are reductions at the language level, i.e. mainly pre and post agglomeration (but enacting these rules can cause further simplifications). The original and reduced model/formula pairs
    that result from this procedure are then exported in the same format the contest uses. This step was implemented within ITS-tools.
    \item Run an MCC compatible tool on both the reduced and original versions of each model/formula pair and record the time performance and the verdict. 
\end{enumerate}

For the first step, using Spot~\cite{ADL15}, we detect that a formula is either shortening or lengthening insensitive for 99.81\% of formulae in less than 1 second. After this analysis, we obtain 12\,227 model/formula pairs where the formula is either shortening insensitive or lengthening insensitive (but not both). Among these pairs, in $3005$ cases (24.6\%) the model was resistant to the structural reduction rules we use. 
Since our strategy does not improve such cases, we retain the remaining 9\,222 (75.4\%) 
model/formula pairs in the performance plots of Figure~\ref{fig:benchmark}. 
We measured that on average 34.19\% of places and 32.69\% of the transitions of the models were discarded by reduction rules with respect to the "original" model, though
the spread is high as there are models that are almost fully reducible and some that are barely so.
Application of reduction rules is in complexity related to the size of the structure of the net and 
takes less than 20 seconds to compute in 95.5\% of the models.
We are able to treat $9222$ examples (21\% of the original $43989$ model/formula pairs of the MCC) using reductions. All these formulae until now could not be handled using reduction techniques.

For the second step, we measured the solution time for both reduced and original model/formula pairs using the two best tools of the MCC'2021 contest. 
A full tool using our strategy might optimistically first run on the reduced model/formula pair hoping for a definitive answer, but we recommend the use of a portfolio approach where the first reliable answer is kept.
In these experiments we neutrally measured the time for taking a semi-decision on the reduced model vs. the time for taking a (complete) decision on the original model. We then classify the results into two sets, decidable instances are shown on the left of Fig.~\ref{fig:benchmark} and instances that are not decidable (by our procedure) are on the right. 
On "decidable instances" our semi-decision procedure could have concluded reliably because the formula is true and the property shortening insensitive or the formula is false and the property lengthening insensitive. 
Non decidable instances shown on the right are those where the verdict on the reduced model is not to be trusted
(or both the original and reduced procedures timed out).

With this workflow we show that our approach is generic and can be easily implemented on top of any MCC compatible model-checking tool. All experiments were run with a 950 seconds timeout (close to 15 minutes, which is generous when the contest offers 1 hour for 16 properties).
We used a heterogeneous cluster of machines with four cores allocated to each experiment, and ensured that experiments concerning reduced and original versions of a given model/formula 
are comparable. 

Figure~\ref{fig:benchmark} presents the results of these  experiments.
The results are all presented as log-log scatter plots opposing a run on the original to a run on the reduced model/formula pair.
Each dot represents an experiment on a model/formula pair; a dot below the diagonal indicates that the reduced version was faster to solve, while a point above it indicates a case where the reduced model actually took longer to solve than the original (fortunately there are relatively few). 
Points that timeout for one (or both) of the approaches are plotted on the line at 950 seconds, we also indicate the number of points that are in this line (or corner) next to it.

The plots on the left  (a) and (c) correspond to "decidable instances" while those on the right are not decidable by our procedure.
The two plots on the top correspond to the performance of ITS-tools, while those on the bottom give the results with Tapaal.  
The general form of the results with both tools is quite similar confirming that our strategy is indeed responsible for the measured gains in performance and that they are reproducible. Reduced problems \emph{are}  generally easier to solve than the original.
This gain is in the best case exponential as is visible through the existence of spread of points reaching out horizontally in this log-log space (particularly on the Tapaal plots).

The colors on the decidable instances reflect whether the verdict was true or false.
For false properties a counter-example was found by both procedures interrupting the search, and while the search space of
a reduced model is a priori smaller, heuristics and even luck can play a role in finding a counter-example early.
True answers on the other hand generally require a full exploration of the state space so that the reductions should play a major role in reducing the complexity of model-checking. The existence of True answers where the reduction fails is surprising at first, but a smaller Kripke structure does not necessarily induce a smaller product as happens sometimes in this large benchmark (and in other reduction techniques such as stubborn sets~\cite{Valmari90}). 
On the other hand the points aligned to the right of the plots a) and c) (189 for ITS-tools and 119 for Tapaal) 
correspond to cases where our procedure improved these state of the art tools, allowing to reach a conclusion when the original method fails.

The plots on the right use orange to denote cases where the verdict on the reduced and original models were the same; on these points
the procedures had comparable behaviors (either exploring a whole state space or exhibiting a counter-example). The blue color denotes points where the two procedures disagree, with several blue points above the diagonal reflecting cases where the reduced procedure explored the whole state space and thought the property was true while the original procedure found a counter-example (this is the worst case).
Surprisingly, even though on these non decidable plots b) and d) our procedure should not be trusted, it mostly agrees (in 95\% of the cases) with the decision reached on the original.

Out of the 9222 experiments in total, for ITS-tools 5901 runs reached a trusted decision (64 \%), 2927 instances reached an untrusted verdict (32 \%), and the reduced procedure timed out in 394 instances (4 \%). Tapaal reached a trusted decision in 5866 instances (64 \%), 2884 instances reached an untrusted verdict (31 \%), and the reduced procedure timed out in 472 instances (5 \%). On this benchmark of formulae we thus reached a trusted decision in almost two thirds of the cases using the reduced procedure.

\section{Related Work}
\label{sec:related}

\noindent\textbf{Partial order vs structural reductions.}
Partial order reduction (POR)~\cite{porBook,Valmari90,Peled94,GodefroidW94} is a very popular approach to combat state explosion for 
\emph{stutter insensitive} formulae.
These approaches use diverse strategies (stubborn sets, ample sets, sleep sets\ldots) to consider only a subset of events at each step of the model-checking while still ensuring that at least one representative of each stutter equivalent class of runs is explored.
Because the preservation criterion is based on equivalence classes of runs, this family of approaches is limited only to the stutter insensitive fragment of LTL (see Fig.\ref{fig:lang}). 
However the structural reduction rules used in this paper are compatible and can be stacked with POR when the formula is stutter insensitive; this is the setting in which most structural reduction rules were originally defined.

\noindent\textbf{Structural reductions in the literature.}
The structural reductions rules we used in the performance evaluation are defined on Petri nets where 
the literature on the subject is rich~\cite{Berthelot85,PPP00,EHPP05,HPP06,berthomieu19,YTM20}.
However there are other formalism where similar reduction rules have been defined such as~\cite{PR08} using "atomic" blocks in Promela,
transaction reductions for the widely encompassing intermediate language PINS of LTSmin~\cite{Laarman18}, and even in the context of multi-threaded programs~\cite{FlanaganQ03}. All these approaches are structural or syntactic, they are run prior to model-checking per se.

\noindent\textbf{Non structural reductions in the literature.} Other strategies have been proposed that instead of structurally reducing the system, dynamically build an abstraction of the Kripke structure where less observable stuttering occurs. These strategies build a $\KS{}$ whose language \emph{is} a reduction of the language of the original $\KS{}$ (in the sense of Def.~\ref{def:reduction}), that can then be presented to the emptiness check algorithm with the negation of the formula. They are thus also compatible with the approach proposed in this paper.
Such strategies include the Covering Step Graph (CSG) construction of~\cite{Vernadat97} where a "step" is performed (instead of firing a single event) that includes several independent transitions. 
The Symbolic Observation Graph of~\cite{KP08} is another example where states of the original $\KS{}$ are computed (using BDDs) and aggregated as long as the atomic proposition values do not evolve; in practice it exhibits to the emptiness check only shortest runs in each equivalence class hence it is a reduction.

\section{Conclusion}

To combat the state space explosion problem that LTL model-checking meets, structural reductions have been proposed that 
syntactically compact the model so that it exhibits less interleaving of non observable actions.  Prior to this work, all of these approaches were limited to the stutter insensitive fragment of the logic. We bring a semi-decision procedure that widens the applicability of these strategies to  formulae which are \emph{shortening insensitive} or \emph{lengthening insensitive}. The experimental evidence presented shows that the fragment of the logic covered by these new categories is quite useful in practice. 
An extensive measure using the models, formulae and the two best tools of the model-checking competition 2021 shows that our strategy
can improve the decision power of state of the art tools, and confirm that in the best case an exponential speedup of the decision procedure can be attained. We also identified several other strategies that are compatible with our approach since they construct a reduced language. In further work we are investigating how non trusted counter-examples of the reduced model could be confirmed on the original model.

\bibliographystyle{splncs04}
\bibliography{biblio.bib}

\end{document}